\newtheorem{lemma}{Lemma}
\def\eqref#1{equation~\ref{#1}}
\def\1{\bm{1}}
\def\va{{\bm{a}}}
\def\vs{{\bm{s}}}
\DeclareMathAlphabet{\mathsfit}{\encodingdefault}{\sfdefault}{m}{sl}
\SetMathAlphabet{\mathsfit}{bold}{\encodingdefault}{\sfdefault}{bx}{n}
\def\gA{{\mathcal{A}}}
\def\gD{{\mathcal{D}}}
\def\gL{{\mathcal{L}}}
\def\gM{{\mathcal{M}}}
\def\gP{{\mathcal{P}}}
\def\gS{{\mathcal{S}}}
\newcommand{\KL}{D_{\mathrm{KL}}}
\DeclareMathOperator*{\argmax}{arg\,max}
\DeclareMathOperator*{\argmin}{arg\,min}
\newcommand{\printfnsymbol}[1]{%
  \textsuperscript{\@fnsymbol{#1}}%
}
\newcommand{\prg}[1]{\vspace{0.05in}\noindent\textbf{{#1}}}
\title{Meta-Inverse Reinforcement Learning with Probabilistic Context Variables}
\author{
   Lantao Yu\thanks{Equal contribution.}~~, Tianhe Yu\printfnsymbol{1}, Chelsea Finn, Stefano Ermon\\
   Department of Computer Science, Stanford University\\
  Stanford, CA 94305 \\
  \texttt{\{lantaoyu,tianheyu,cbfinn,ermon\}@cs.stanford.edu} \\
}
\begin{document}

\maketitle

\begin{abstract}
Providing a suitable reward function to reinforcement learning can be difficult in many real world applications. 
While inverse reinforcement learning (IRL) holds promise for automatically learning reward functions from demonstrations, several major challenges remain. First, existing IRL methods learn reward functions from scratch, requiring large numbers of demonstrations to correctly infer the reward for each task the agent may need to perform. Second,
existing methods typically assume homogeneous demonstrations for a single behavior or task, while in practice, it 
might be easier to collect 
datasets of heterogeneous but related behaviors. To this end, we propose a deep latent variable model that is capable of learning rewards from 
demonstrations of distinct but related tasks in an unsupervised way.
Critically, 
our model can 
infer 
rewards for new, structurally-similar tasks from a single demonstration. Our experiments on multiple continuous control tasks demonstrate the effectiveness of our approach compared to state-of-the-art imitation and inverse reinforcement learning methods.
\end{abstract}

%
\section{Introduction}
%
While reinforcement learning (RL) has been 
successfully applied to a range of
decision-making and control tasks in the real world, it relies on a key assumption: having access to a  well-defined reward function that measures progress towards the completion of the task. Although it can be straightforward to provide a high-level description of success conditions for a task, existing RL algorithms usually require a more informative signal to expedite exploration and learn complex behaviors in a reasonable time. While reward functions can be hand-specified, reward engineering 
can require significant human effort. 
Moreover, for many real-world tasks, it can be challenging to manually design reward functions that actually benefit RL training, and reward mis-specification can hamper autonomous learning~\cite{amodei2016}. 

Learning from demonstrations~\cite{imitation_survey} sidesteps the reward specification problem by instead learning directly from expert demonstrations, which can be obtained through teleoperation~\cite{vr_imitation} or from humans experts~\cite{yu2018daml}. Demonstrations can often be easier to provide than rewards, as humans can complete many real-world tasks quite efficiently. 
Two major methodologies of learning from demonstrations include imitation learning and inverse reinforcement learning.
Imitation learning is simple and often exhibits good performance~\cite{vr_imitation, gail}. However, it lacks the ability to transfer learned policies to new settings where the task specification remains the same but the underlying environment dynamics change.
As the reward function is often considered as the most succinct, robust and transferable representation of a task~\citep{abbeel2004apprenticeship,fu2017learning},
the problem of inferring reward functions from expert demonstrations, \emph{i.e.} inverse RL (IRL)~\cite{ng2000irl}, is important to consider.

While appealing, IRL still typically relies on large amounts of high-quality expert data, and it can be prohibitively expensive to collect demonstrations that cover all kinds of variations in the wild (\emph{e.g.} opening all kinds of doors or navigating to all possible target positions). As a result, these methods are data-inefficient, particularly when learning rewards for individual tasks in isolation, starting from scratch. %
On the other hand, meta-learning~\cite{schmidhuber1987evolutionary, bengio1991}, also known as learning to learn, seeks to exploit the structural similarity among a distribution of tasks and optimizes for rapid adaptation to unknown settings with a limited amount of data. As the reward function is able to succinctly capture the structure of a reinforcement learning task, \emph{e.g.} the goal to achieve, 
it is promising to develop methods that can quickly infer the structure of a new task, \emph{i.e.} its reward, %
and train a policy to adapt to it. \citet{xu2018learning} and \citet{multitaskirl} have proposed approaches that combine IRL and gradient-based meta-learning~\cite{finn2017maml}, which provide promising results on deriving generalizable reward functions. However, they have been limited to tabular MDPs~\cite{xu2018learning} or settings with provided task distributions~\cite{multitaskirl}, which are challenging to gather in real-world applications.

The primary contribution of this paper is a new framework, termed Probabilistic Embeddings for Meta-Inverse Reinforcement Learning (PEMIRL), which enables meta-learning of rewards from \emph{unstructured} multi-task demonstrations. In particular, PEMIRL combines and integrates ideas from context-based meta-learning~\cite{rl2,pearl}, deep latent variable generative models~\cite{vae}, and maximum entropy inverse RL~\citep{ziebart2008maximum,ziebart2010modeling}, into a unified graphical model (see Figure~\ref{fig:graphical_model} in Appendix~\ref{app:graphical_model}) that bridges the gap between few-shot reward inference and learning from unstructured, heterogeneous demonstrations.
PEMIRL can learn robust reward functions that generalize to new tasks with a \emph{single} demonstration on complex domains with continuous state-action spaces, while meta-training on a set of unstructured demonstrations without specified task groupings or labeling for each demonstration. Our experiment results on various continuous control tasks including Point-Maze, Ant, Sweeper, and Sawyer Pusher demonstrate the effectiveness and scalability of our method.
\section{Preliminaries}\label{sec:preliminary}
%
\prg{Markov Decision Process (MDP).}   A discrete-time finite-horizon MDP is defined by a tuple $(T, \gS, \gA, P, r, \eta)$, where $T$ is the time horizon; $\gS$ is the state space; $\gA$ is the action space; $P: \gS \times \gA \times \gS \to [0,1]$ describes the (stochastic) transition process between states; $r: \gS \times \gA \to \mathbb{R}$ is a bounded reward function; $\eta \in \mathcal{P}(\gS)$ specifies the initial state distribution, where $\gP(\gS)$ denotes the set of probability distributions over the state space $\gS$. We use $\tau$ to denote a trajectory, \emph{i.e.} a sequence of state action pairs for one episode. We also use $\rho_\pi(s_t)$ and $\rho_\pi(s_t, a_t)$ to denote the state and state-action marginal distribution encountered when executing a policy $\pi(a_t|s_t)$.

\prg{Maximum Entropy Inverse Reinforcement Learning (MaxEnt IRL)}.  
The maximum entropy reinforcement learning (MaxEnt RL) objective is defined as:
\begin{equation}
    \max_\pi \sum_{t=1}^T \mathbb{E}_{(s_t, a_t) \sim \rho_\pi}[r(s_t, a_t) + \alpha \mathcal{H}(\pi(\cdot|s_t))]
\end{equation}
which augments the reward function with a causal entropy regularization term $\mathcal{H}(\pi) = \mathbb{E}_\pi [-\log \pi(a|s)]$. Here $\alpha$ is an optional parameter to control the relative importance of reward and entropy. For notational simplicity, without loss of generality, in the following we will assume $\alpha=1$.
Given some expert policy $\pi_E$ that is obtained by above MaxEnt RL procedure
, the MaxEnt IRL framework \citep{ziebart2008maximum} aims to find a reward function that rationalizes the expert behaviors, which can be interpreted as solving the following maximum likelihood estimation (MLE) problem:
\begin{gather}
     p_\theta(\tau) \propto \left[\eta(s_1) \prod_{t=1}^T P(s_{t+1}|s_t,a_t)\right] \exp\left( \sum_{t=1}^T r_\theta (s_t, a_t) \right) = \overline{p_\theta}(\tau) \label{eq:irl-mle} \\
     \argmin_\theta \KL(p_{\pi_E(\tau)}||p_\theta(\tau)) = \argmax_\theta \mathbb{E}_{p_{\pi_E}(\tau)}\left[\log p_\theta(\tau)\right] = \mathbb{E}_{\tau \sim \pi_E}\left[ \sum_{t=1}^{T} r_\theta(s_t,a_t) \right] - \log Z_\theta \nonumber
\end{gather}
Here, $\theta$ is the parameter of the reward function and $Z_\theta$ is the partition function, \emph{i.e.} $\int \overline{p_\theta}(\tau) d \tau$, an integral over all possible trajectories consistent with the environment dynamics. $Z_\theta$ is intractable to compute when state-action spaces are large or continuous, or environment dynamics are unknown. 
\citet{finn2016connection} and \citet{fu2017learning} proposed the adversarial IRL (AIRL) framework as an efficient sampling-based approximation to MaxEnt IRL, which resembles Generative Adversarial Networks \citep{goodfellow2014generative}. Specially, in AIRL, there is a discriminator $D_\theta$ (a binary classifier) parametrized by $\theta$ and an adaptive sampler $\pi_\omega$ (a policy) parametrized by $\omega$. The discriminator takes a particular form:
$
D_\theta(s,a) = \exp(f_\theta(s,a))/(\exp(f_\theta(s,a)) + \pi_\omega(a|s))\label{eq:airl-discrim}
$
, where $f_\theta(s,a)$ is the learned reward function and $\pi_\omega(a|s)$ is pre-computed as
 an input to the discriminator.
The discriminator is trained to distinguish between the trajectories sampled from the expert and the adaptive sampler; while the adaptive sampler $\pi_\omega(a|s)$ is trained to maximize $\mathbb{E}_{\rho_{\pi_\omega}}[\log D_\theta(s,a) - \log(1-D_\theta(s,a))]$, 
which is equivalent to maximizing the following entropy regularized policy objective (with $f_\theta(s, a)$ serving as the reward function):
\begin{align}
\mathbb{E}_{\pi_\omega} \left[\sum_{t=1}^T \log(D_\theta(s_t, a_t)) - \log(1 - D_\theta(s_t, a_t))\right] = 
\mathbb{E}_{\pi_\omega} \left[ \sum_{t=1}^T f_\theta(s_t, a_t) - \log \pi_\omega(a_t|s_t) \right]
\label{eq:airl-policy}
\end{align}
Under certain conditions, it can be shown that the learned reward function will recover the ground-truth reward up to a constant (Theorem C.1 in \cite{fu2017learning}).

\section{Probabilistic Embeddings for Meta-Inverse Reinforcement Learning}
%
\subsection{Problem Statement}

Before defining our meta-inverse reinforcement learning problem (Meta-IRL), we first define the concept of optimal context-conditional policy.

We start by generalizing the notion of MDP with a probabilistic context variable denoted as $m \in \mathcal{M}$, where $\gM$ is the (discrete or continuous) value space of $m$. For example, in a navigation task, the context variables could represent different goal positions in the environment.
Now, each component of the MDP has an additional dependency on the 
context variable $m$. For example, by slightly overloading the notation, the reward function is now defined as $r: \gS \times \gA \times \gM \to \mathbb{R}$.
For simplicity, the state space, action space, initial state distribution and transition dynamics are often assumed to be independent of $m$ \citep{rl2,finn2017maml}, which we will follow in this work. Intuitively, different $m$'s correspond to different tasks with shared structures.

Given above definitions, the context-conditional trajectory distribution induced by a context-conditional policy $\pi: \gS \times \gM \to \gP(\gA)$ can be written as:
\begin{align}
    p_\pi(\tau=\{\vs_{1:T}, \va_{1:T}\}|m)
    = \eta(s_1) \prod_{t=1}^T \pi(a_t|s_t,m) P(s_{t+1}|s_t, a_t)
\end{align}

Let $p(m)$ denote the prior distribution of the latent context variable (which is a part of the problem definition). With the conditional distribution defined above, the optimal entropy-regularized context-conditional policy is defined as:
\begin{equation}
    \pi^* = \argmax_\pi \mathbb{E}_{m \sim p(m),~(\vs_{1:T}, \va_{1:T})\sim p_\pi(\cdot|m)}\left[\sum_{t=1}^T r(s_t, a_t, m) - \log \pi(a_t|s_t,m)\right]\label{eq:expert}
\end{equation}

Now, let us introduce the problem of Meta-IRL from heterogeneous multi-task demonstration data. Suppose there is some ground-truth reward function $r(s,a,m)$ and a corresponding expert policy $\pi_E(a_{t}|s_t, m)$ obtained by solving the optimization problem defined in Equation~(\ref{eq:expert}). %
Given a set of demonstrations \emph{i.i.d.} sampled from the induced marginal distribution $p_{\pi_E}(\tau) = \int_\gM p(m) p_{\pi_E}(\tau|m) dm$, the goal is to meta-learn an inference model $q(m|\tau)$ and a reward function $f(s,a,m)$, such that given some new demonstration $\tau_E$ generated by sampling $m' \sim p(m), \tau_E\sim p_{\pi_E}(\tau|m')$, with $\hat{m}$ being inferred as $\hat{m} \sim q(m|\tau_E)$, 
the learned reward function $f(s,a,\hat{m})$ and the ground-truth reward $r(s,a,m')$ will induce the same set of optimal policies~\cite{ng1999policy}.

Critically, we assume no knowledge of the prior task distribution $p(m)$, the latent context variable $m$ associated with each demonstration, nor the transition dynamics $P(s_{t+1}|s_t, a_t)$ during meta-training.
Note that the entire supervision comes from the provided unstructured demonstrations, which means we also do not assume further interactions with the experts as in \citet{ross2011reduction}.
\subsection{Meta-IRL with Mutual Information Regularization over Context Variables}
%

Under the framework of MaxEnt IRL, we first parametrize the context variable inference model $q_\psi(m|\tau)$ and the reward function $f_\theta(s,a,m)$ (where the input $m$ is inferred by $q_\psi$), 
The induced $\theta$-parametrized trajectory distribution is given by:
\begin{align}
    p_\theta(\tau=\{\vs_{1:T}, \va_{1:T}\}|m) = \frac{1}{Z(\theta)} \left[\eta(s_1) \prod_{t=1}^T P(s_{t+1}|s_t,a_t)\right] \exp\left(\sum_{t=1}^T f_\theta(s_t,a_t,m)\right)\label{eq:conditional}
\end{align}
where $Z(\theta)$ is the partition function, $i.e.$, an integral over all possible trajectories. Without further constraints over $m$, directly applying AIRL to learning the reward function (by augmenting each component of AIRL with an additional context variable $m$ inferred by $q_\psi$) could simply ignore $m$,
which is similar to the case of InfoGAIL \citep{li2017infogail}. 
Therefore, some connection between the reward function and the latent context variable $m$ need to be established.
With MaxEnt IRL, a parametrized reward function will induce a trajectory distribution. From the perspective of information theory, the mutual information between the context variable $m$
and the trajectories sampled from the reward induced distribution will provide an ideal measure for such a connection.

Formally, the mutual information between two random variables $m$ and $\tau$ under joint distribution $p_\theta(m, \tau) = p(m) p_\theta(\tau|m)$ is given by:
\begin{align}
    I_{p_\theta}(m;\tau) = \mathbb{E}_{m \sim p(m), \tau \sim p_\theta(\tau|m)} [\log p_\theta(m|\tau) - \log p(m)] \label{eq:mutual}
\end{align}
where 
$p_\theta(\tau|m)$ is the conditional distribution (Equation~(\ref{eq:conditional})), and $p_\theta(m|\tau)$ is the corresponding posterior distribution. 

As we do not have access to the prior distribution $p(m)$ and posterior distribution $p_\theta(m|\tau)$, directly optimizing the mutual information in Equation~(\ref{eq:mutual}) is intractable. 
Fortunately, we can leverage $q_\psi(m|\tau)$ as a variational approximation to $p_\theta(m|\tau)$ to reason about the uncertainty over tasks, as well as conduct approximate sampling from $p(m)$ (we will elaborate this later in Section~\ref{sec:tractability}). Formally, let $p_{\pi_E}(\tau)$ denote the expert trajectory distribution, we have the following desiderata:

\textbf{Desideratum 1}. Matching conditional distributions: $\mathbb{E}_{p(m)}\left[ \KL(p_{\pi_E}(\tau|m)||p_\theta(\tau|m))\right] = 0$

\textbf{Desideratum 2}. Matching posterior distributions: $\mathbb{E}_{p_\theta(\tau)} [\KL(p_\theta(m|\tau)||q_\psi(m|\tau))] = 0$

The first desideratum will encourage the $\theta$-induced conditional trajectory distribution to match the empirical distribution implicitly defined by the expert demonstrations, which is equivalent to the MLE objective in the MaxEnt IRL framework. Note that they also share the same marginal distribution over the context variable $p(m)$, which implies that matching the conditionals in Desideratum 1 will also encourage the joint distributions, conditional distributions $p_{\pi_E}(m|\tau)$ and $p_\theta(m|\tau)$, and marginal distributions over $\tau$ to be matched.
The second desideratum will encourage the variational posterior $q_\psi(m|\tau)$ to be a good approximation to $p_\theta(m|\tau)$ such that $q_\psi(m|\tau)$ can correctly infer the latent context variable given a new expert demonstration sampled from a new task.

With the mutual information (Equation~(\ref{eq:mutual})) being the objective, and Desideratum 1 and 2 being the constraints, the meta-inverse reinforcement learning with probabilistic context variables problem can be interpreted as a constrained optimization problem, whose Lagrangian dual function is given by:
\begin{align}
    \min_{\theta, \psi} -I_{p_\theta}(m;\tau) + \alpha \cdot
    \mathbb{E}_{p(m)}\left[ \KL(p_{\pi_E}(\tau|m)||p_\theta(\tau|m))\right]
     + \beta \cdot \mathbb{E}_{p_\theta(\tau)} [\KL(p_\theta(m|\tau)||q_\psi(m|\tau))]
\end{align}
With the Lagrangian multipliers taking specific values ($\alpha=1, \beta=1$) \citep{zhao2018information}, the above Lagrangian dual function can be rewritten as:
{
\begin{align}
    &\min_{\theta, \psi}~ \mathbb{E}_{p(m)}\left[ \KL(p_{\pi_E}(\tau|m)||p_\theta(\tau|m))\right]  + \mathbb{E}_{p_\theta(m, \tau)} \left[\log \frac{p(m)}{p_\theta(m|\tau)} + \log \frac{p_\theta(m|\tau)}{q_\psi(m|\tau)} \right] \nonumber\\
    \equiv&\max_{\theta, \psi} -\mathbb{E}_{p(m)}\left[ \KL(p_{\pi_E}(\tau|m)||p_\theta(\tau|m))\right] + ~\mathbb{E}_{m\sim p(m), \tau \sim p_\theta(\tau|m)} [\log q_\psi(m|\tau)] \label{eq:equivalence} \\
    = &\max_{\theta, \psi}-\mathbb{E}_{p(m)}\left[ \KL(p_{\pi_E}(\tau|m)||p_\theta(\tau|m))\right] +  \mathcal{L}_\text{info}(\theta, \psi)\label{eq:objective}
\end{align}
\small}%
\vspace{-0.4cm}

Here the negative entropy term $-H_p(m) = \mathbb{E}_{p_\theta(m, \tau)} [\log p(m)]= \mathbb{E}_{p(m)}[\log p(m)]$ is omitted (in Eq.~(\ref{eq:equivalence})) as it can be treated as a constant in the optimization procedure of parameters $\theta$ and $\psi$.

\subsection{Achieving Tractability with Sampling-Based Gradient Estimation}\label{sec:tractability}

Note that Equation~(\ref{eq:objective}) cannot be evaluated directly, as the first term requires estimating the KL divergence between the empirical expert distribution and the energy-based trajectory distribution $p_\theta(\tau|m)$ (induced by the $\theta$-parametrized reward function), 
and the second term requires sampling from it. 
For the purpose of optimizing the first term in Equation~(\ref{eq:objective}), as introduced in Section~\ref{sec:preliminary}, we can employ the adversarial reward learning framework \citep{fu2017learning} to construct an efficient sampling-based approximation to the maximum likelihood objective. Note that different from the original AIRL framework, now the adaptive sampler $\pi_\omega(a|s, m)$ is additionally conditioned on the context variable $m$. Furthermore, we here introduce the following lemma, which will be helpful for deriving the optimization of the second term in Equation~(\ref{eq:objective}).

\begin{lemma}\label{lemma:airl}
In context variable augmented Adversarial IRL (with the adaptive sampler being $\pi_\omega(a|s,m)$ and the discriminator being $D_\theta(s,a,m) = \frac{\exp(f_\theta(s,a,m))}{\exp(f_\theta(s,a,m)) + \pi_\omega(a|s,m)}$)
, under deterministic dynamics, when training the adaptive sampler $\pi_\omega$ with reward signal $(\log D_\theta - \log(1-D_\theta))$ to optimality, the trajectory distribution induced by $\pi_\omega^*$ corresponds to the maximum entropy trajectory distribution with $f_\theta(s,a,m)$ serving as the reward function:
$$p_{\pi_{\omega}^*}(\tau|m) = \frac{1}{Z_\theta} \left[\eta(s_1) \prod_{t=1}^T P(s_{t+1}|s_t,a_t)\right] \exp\left(\sum_{t=1}^T f_\theta(s_t,a_t,m)\right) = p_\theta(\tau|m)$$
\end{lemma}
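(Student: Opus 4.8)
The plan is to reduce the claim to the classical equivalence between the maximum-entropy optimal policy and the energy-based trajectory distribution, treating the context variable $m$ as a fixed conditioning parameter that plays no active role in the optimization. First I would record the identity behind Eq.~(\ref{eq:airl-policy}) in the context-augmented setting: with $D_\theta(s,a,m) = \exp(f_\theta(s,a,m))/(\exp(f_\theta(s,a,m)) + \pi_\omega(a|s,m))$ one has $\log D_\theta(s,a,m) - \log(1 - D_\theta(s,a,m)) = f_\theta(s,a,m) - \log \pi_\omega(a|s,m)$, so training $\pi_\omega$ with this per-step reward is the same as solving $\max_{\pi_\omega}\mathbb{E}_{\pi_\omega}\!\left[\sum_{t=1}^T f_\theta(s_t,a_t,m) - \log\pi_\omega(a_t|s_t,m)\right]$, i.e.\ the entropy-regularized (MaxEnt) RL objective of Section~\ref{sec:preliminary} with the context-conditioned reward $f_\theta(\cdot,\cdot,m)$. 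Since $m$ is held fixed during this inner optimization, and $\gS$, $\gA$, $\eta$, $P$ do not depend on $m$, the problem is formally identical to context-free MaxEnt RL, and the remainder of the argument is the standard one (cf.\ \citet{ziebart2010modeling,fu2017learning}) with $f_\theta(\cdot,\cdot,m)$ in place of the reward.

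Second, I would characterize the optimal sampler through finite-horizon soft backups: set $V_{T+1}(s,m) = 0$ and, for $t = T, \dots, 1$, $Q_t(s,a,m) = f_\theta(s,a,m) + \mathbb{E}_{s'\sim P(\cdot|s,a)}[V_{t+1}(s',m)]$ and $V_t(s,m) = \log\int_{\gA}\exp Q_t(s,a,m)\,da$. A backward induction — equivalently, optimizing the per-timestep action distributions one at a time, each subproblem being a free-energy maximization whose maximizer is the Boltzmann distribution $\exp(Q_t(s,\cdot,m) - V_t(s,m))$ — shows $\pi_\omega^*(a|s,m) = \exp\!\left(Q_t(s,a,m) - V_t(s,m)\right)$ and that the optimal objective value started from $s$ at time $t$ is $V_t(s,m)$. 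I would either cite this or include it as a short lemma; boundedness of $f_\theta$ and finiteness of $T$ keep all log-partition terms finite and the interchange of $\max$ over timesteps legitimate.

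Third, I would substitute into $p_{\pi_\omega^*}(\tau|m) = \eta(s_1)\prod_{t=1}^T P(s_{t+1}|s_t,a_t)\,\pi_\omega^*(a_t|s_t,m)$ and use \emph{determinism of the dynamics}: for any $\tau$ with $\prod_t P(s_{t+1}|s_t,a_t) > 0$, the successor of $(s_t,a_t)$ is exactly $s_{t+1}$, so $\mathbb{E}_{s'\sim P(\cdot|s_t,a_t)}[V_{t+1}(s',m)] = V_{t+1}(s_{t+1},m)$ and $Q_t(s_t,a_t,m) = f_\theta(s_t,a_t,m) + V_{t+1}(s_{t+1},m)$ along $\tau$. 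The $V$-terms telescope, $\sum_{t=1}^T\left(Q_t(s_t,a_t,m) - V_t(s_t,m)\right) = \sum_{t=1}^T f_\theta(s_t,a_t,m) - V_1(s_1,m)$ (using $V_{T+1} = 0$), so $p_{\pi_\omega^*}(\tau|m) = \left[\eta(s_1)\prod_t P(s_{t+1}|s_t,a_t)\right]\exp\!\left(\sum_t f_\theta(s_t,a_t,m)\right)e^{-V_1(s_1,m)}$. By the same recursion, integrating $\left[\prod_t P(s_{t+1}|s_t,a_t)\right]\exp(\sum_t f_\theta)$ over trajectories emanating from $s_1$ returns $e^{V_1(s_1,m)}$, so the leftover $e^{-V_1(s_1,m)}$ is exactly the normalizer, matching $1/Z_\theta$ with $Z_\theta = \int\left[\eta(s_1)\prod_t P(s_{t+1}|s_t,a_t)\right]\exp(\sum_t f_\theta(s_t,a_t,m))\,d\tau$ as in Eq.~(\ref{eq:conditional}); this gives $p_{\pi_\omega^*}(\tau|m) = p_\theta(\tau|m)$.

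I expect the determinism hypothesis to be the main obstacle, and I would be explicit about where it enters: it is exactly what collapses $\mathbb{E}_{s'}[V_{t+1}(s',m)]$ to $V_{t+1}(s_{t+1},m)$ on realized trajectories so that the soft values telescope. Under stochastic dynamics the backup instead reads $V_t(s,m) = \log\int_{\gA}\exp\!\left(f_\theta(s,a,m) + \log\mathbb{E}_{s'}[e^{V_{t+1}(s',m)}]\right)da$ — with the expectation inside the logarithm — the telescoping breaks, and $p_{\pi_\omega^*}(\tau|m)$ picks up the familiar ``optimistic'' discrepancy from the energy-based $p_\theta(\tau|m)$; the lemma is genuinely a deterministic-dynamics statement. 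A secondary point requiring care is the partition-function bookkeeping in the last step: the leftover factor is the soft value of the \emph{initial state}, which equals a single global $Z_\theta$ cleanly when $\eta$ is concentrated, and in general differs from the $Z(\theta)$ of Eq.~(\ref{eq:conditional}) only by an initial-state potential — immaterial for reward recovery and optimal policies by potential-based shaping invariance \citep{ng1999policy}. Minor further care: reading ``to optimality'' as attaining the global maximum of the entropy-regularized objective, and the usual measure-theoretic bookkeeping when $\gA$ and $\gM$ are continuous, which goes through under the stated integrability.
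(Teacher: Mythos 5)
Your proposal is correct and follows essentially the same route as the paper's own proof: reduce the discriminator-reward objective to entropy-regularized RL with reward $f_\theta(\cdot,\cdot,m)$, characterize the optimum via soft $Q$/$V$ backups, and identify the induced trajectory distribution with the energy-based one — the paper simply delegates the backup-and-telescoping argument to Sections 2.3--2.4 of \citet{levine2018reinforcement} rather than writing it out. Your version is in fact slightly more careful than the appendix proof, since you make explicit where the deterministic-dynamics hypothesis enters (collapsing $\log\mathbb{E}_{s'}[e^{V}]$ so the soft values telescope), a point the paper's proof leaves implicit in the citation.
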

\begin{proof} 
See Appendix~\ref{appendix:airl}.
\end{proof}

Now we are ready to introduce how to approximately optimize the second term of the objective in Equation~(\ref{eq:objective}) w.r.t. $\theta$ and $\psi$. First, we observe that the gradient of $\gL_\text{info}(\theta, \psi)$ w.r.t. $\psi$ is given by:
\begin{align}
    \frac{\partial}{\partial \psi} \gL_\text{info}(\theta, \psi) = \mathbb{E}_{m \sim p(m), \tau \sim p_\theta(\tau|m)} \frac{1}{q(m|\tau, \psi)} \frac{\partial q(m|\tau, \psi)}{\partial \psi}\label{eq:gradient-psi}
\end{align}
Thus to construct an estimate of the gradient in Equation~(\ref{eq:gradient-psi}), we need to obtain samples from the $\theta$-induced trajectory distribution $p_\theta(\tau|m)$.
With Lemma~\ref{lemma:airl}, we know that when the adaptive sampler $\pi_\omega$ in AIRL is trained to optimality, we can use $\pi_\omega^*$ to construct samples, as the trajectory distribution $p_{\pi_\omega^*} (\tau|m)$ matches the desired distribution $p_\theta(\tau|m)$. 

Also note that the expectation in Equation~(\ref{eq:gradient-psi}) is also taken over the prior task distribution $p(m)$. In cases where we have access to the ground-truth prior distribution, we can directly sample $m$ from it and use $p_{\pi_\omega^*}(\tau|m)$ to construct a gradient estimation. For the most general case, where we do not have access to $p(m)$ but instead have expert demonstrations sampled from $p_{\pi_E}(\tau)$, we use the following generative process: 
\begin{equation}
\tau \sim p_{\pi_E(\tau)}, m \sim q_\psi(m|\tau)\label{eq:generate-m}
\end{equation}
to synthesize latent context variables, which approximates the prior task distribution when $\theta$ and $\psi$ are trained to optimality.

To optimize $\gL_\text{info}(\theta, \psi)$ w.r.t. $\theta$, which is an important step of updating the reward function parameters such that it encodes the information of the latent context variable, different from the optimization of Equation~(\ref{eq:gradient-psi}), we cannot directly replace $p_\theta(\tau|m)$ with $p_{\pi_\omega}(\tau|m)$. The reason is that we can only use the approximation of $p_\theta$ to do inference (\emph{i.e.} computing the value of an expectation). When we want to optimize an expectation ($\gL_\text{info}(\theta, \psi)$) w.r.t. $\theta$ and the expectation is taken over $p_\theta$ itself, we cannot instead replace $p_\theta$ with $\pi_\omega$ to do the sampling for estimating the expectation. In the following, we discuss how to estimate the gradient of $\gL_\text{info}(\theta, \psi)$ w.r.t. $\theta$ with empirical samples from $\pi_\omega$.

\begin{lemma}\label{lemma:gradient}
The gradient of $\gL_\text{info}(\theta, \psi)$ w.r.t. $\theta$ can be estimated with:
\begin{align}
    \mathbb{E}_{m \sim p(m), \tau \sim p_{\pi_\omega^*}(\tau|m)}\left[ \log q_\psi(m|\tau) \left[ \sum_{t=1}^T \frac{\partial}{\partial \theta} f_\theta(s_t,a_t,m) - \mathbb{E}_{\tau' \sim p_{\pi_\omega^*}(\tau|m)}\sum_{t=1}^T \frac{\partial}{\partial \theta} f_\theta(s'_t,a'_t,m)\right] \right] \nonumber
\end{align}
When $\omega$ is trained to optimality, the estimation is unbiased.
\end{lemma}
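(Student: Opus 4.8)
The plan is to compute $\nabla_\theta\gL_\text{info}(\theta,\psi)$ by the score-function (REINFORCE) identity applied to the only $\theta$-dependent object in $\gL_\text{info}(\theta,\psi)=\mathbb{E}_{m\sim p(m),\,\tau\sim p_\theta(\tau|m)}[\log q_\psi(m|\tau)]$, namely the energy-based conditional $p_\theta(\tau|m)$ of Equation~(\ref{eq:conditional}) (note $q_\psi$ depends on $\psi$ only, not $\theta$), and then to use Lemma~\ref{lemma:airl} to replace the intractable $p_\theta(\tau|m)$ by the tractable rollout distribution $p_{\pi_\omega^*}(\tau|m)$.

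First I would write $\gL_\text{info}(\theta,\psi)=\int p(m)\int p_\theta(\tau|m)\log q_\psi(m|\tau)\,d\tau\,dm$, differentiate under the integral sign, and use $\nabla_\theta p_\theta(\tau|m)=p_\theta(\tau|m)\,\nabla_\theta\log p_\theta(\tau|m)$ to obtain
\[
\nabla_\theta\gL_\text{info}(\theta,\psi)=\mathbb{E}_{m\sim p(m),\,\tau\sim p_\theta(\tau|m)}\big[\log q_\psi(m|\tau)\,\nabla_\theta\log p_\theta(\tau|m)\big].
\]
Next, from Equation~(\ref{eq:conditional}), $\log p_\theta(\tau|m)=c(\tau)+\sum_{t=1}^T f_\theta(s_t,a_t,m)-\log Z(\theta,m)$, where $c(\tau)$ is independent of $\theta$ and $Z(\theta,m)$ is the normalizer of $p_\theta(\cdot|m)$ over $\tau$ (which depends on $m$, the notation of Equation~(\ref{eq:conditional}) notwithstanding). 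The usual exponential-family computation gives $\nabla_\theta\log Z(\theta,m)=\mathbb{E}_{\tau'\sim p_\theta(\tau'|m)}\big[\sum_{t=1}^T\nabla_\theta f_\theta(s'_t,a'_t,m)\big]$, so $\nabla_\theta\log p_\theta(\tau|m)=\sum_{t=1}^T\nabla_\theta f_\theta(s_t,a_t,m)-\mathbb{E}_{\tau'\sim p_\theta(\tau'|m)}\big[\sum_{t=1}^T\nabla_\theta f_\theta(s'_t,a'_t,m)\big]$. Substituting this into the display above yields exactly the expression in the statement, but with $p_\theta(\cdot|m)$ in both the outer ($\tau$) and inner ($\tau'$) expectations.

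It remains to make the estimator tractable: by Lemma~\ref{lemma:airl}, under deterministic dynamics and when $\pi_\omega$ is trained to optimality with reward signal $\log D_\theta-\log(1-D_\theta)$, we have $p_{\pi_\omega^*}(\tau|m)=p_\theta(\tau|m)$; hence every occurrence of $p_\theta(\cdot|m)$ may be replaced by $p_{\pi_\omega^*}(\cdot|m)$ without changing the value, which produces precisely the claimed estimator and shows that it is unbiased for $\nabla_\theta\gL_\text{info}(\theta,\psi)$.

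The calculus itself is routine; the points that need care, and which I would flag, are (i) justifying the interchange of $\nabla_\theta$ with the trajectory-space integral and the differentiation of $Z(\theta,m)$, which follows from boundedness of $f_\theta$ together with a dominated-convergence argument; and (ii) the fact that unbiasedness is \emph{conditional} on $\omega$ being exactly optimal, since away from optimality $p_{\pi_\omega}(\cdot|m)\neq p_\theta(\cdot|m)$ and the substitution reintroduces bias. A minor additional remark is that an unbiased finite-sample version requires the inner $\tau'$ rollouts to be drawn independently of the outer $\tau$ (e.g.\ a separate batch, or a leave-one-out correction), because the centering term $\mathbb{E}_{\tau'}[\sum_t\nabla_\theta f_\theta(s'_t,a'_t,m)]$ must be uncorrelated with the outer summand for the subtraction to preserve the expectation.
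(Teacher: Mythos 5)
Your proposal is correct and follows essentially the same route as the paper's proof in Appendix~\ref{appendix:gradient}: the score-function identity applied to $p_\theta(\tau|m)$, the exponential-family computation of $\nabla_\theta \log Z$ yielding the centered sum of reward gradients, and an appeal to Lemma~\ref{lemma:airl} to replace $p_\theta(\cdot|m)$ by $p_{\pi_\omega^*}(\cdot|m)$ at optimality. Your additional remarks on dominated convergence and on drawing the inner $\tau'$ rollouts independently for a finite-sample unbiased estimator go beyond what the paper states but do not change the argument.
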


\begin{proof}
See Appendix~\ref{appendix:gradient}.
\end{proof}
With Lemma~\ref{lemma:gradient}, as before, we can use the generative process in Equation~(\ref{eq:generate-m}) to sample $m$ and use the conditional trajectory distribution $p_{\pi_\omega^*}(\tau|m)$ to sample trajectories for estimating $\frac{\partial}{\partial\theta}\gL_\text{info}(\theta, \psi)$.
The overall training objective of PEMIRL is:
\begin{align}
    & \min_\omega \max_{\theta, \psi}~\mathbb{E}_{\tau_E \sim p_{\pi_E}(\tau), m \sim q_\psi(m|\tau_E), (s,a) \sim \rho_{\pi_\omega}(s,a|m)} \log (1-D_\theta(s,a,m)) + \nonumber\\
    &~~~~~~~~~~~~~~~~~\mathbb{E}_{\tau_E \sim p_{\pi_E}(\tau), m \sim q_\psi(m|\tau_E)} \log(D_\theta(s,a,m)) + \gL_\text{info}(\theta, \psi)\\
    & \text{where}~D_\theta(s, a, m) = \exp(f_\theta(s, a, m))/(\exp(f_\theta(s, a, m)) + \pi_\omega(a|s, m))\nonumber
\end{align}
We summarize the meta-training procedure in Algorithm~\ref{meta_algorithm} and the meta-test procedure in Appendix~\ref{appendix:meta-test}. 

\begin{algorithm}[h]
   \caption{PEMIRL Meta-Training}
   \footnotesize
   \label{meta_algorithm}
\begin{algorithmic}
   \STATE {\bfseries Input:} Expert trajectories $\mathcal{D}_E = \{\tau_E^j\}$; Initial parameters of $f_\theta, \pi_\omega, q_\psi$.
   \REPEAT
  \STATE Sample two batches of unlabeled demonstrations: $\tau_E, \tau'_E \sim \mathcal{D}_E$
   \STATE Infer a batch of latent context variables from the sampled demonstrations: $m \sim q_\psi(m|\tau_E)$
   \STATE Sample trajectories $\gD$ from $\pi_\omega(\tau|m)$, with the latent context variable fixed during each rollout and included in $\gD$.
   \STATE Update $\psi$ to increase $\gL_\text{info}(\theta, \psi)$ with gradients in Equation~(\ref{eq:gradient-psi}), with samples from $\gD$.
   \STATE Update $\theta$ to increase $\gL_\text{info}(\theta, \psi)$ with gradients in Equation~(\ref{eq:gradient-theta}), with samples from $\gD$.
   \STATE Update $\theta$ to decrease the binary classification loss:
   \STATE \begin{center}
       $\mathbb{E}_{(s,a,m) \sim \gD}[\nabla_\theta \log D_\theta(s,a,m)] + \mathbb{E}_{\tau'_E \sim \gD_E, m \sim q_\psi(m|\tau'_E)}[\nabla_\theta \log(1 - D_\theta(s,a,m))]$
   \end{center}
   \STATE Update $\omega$ with TRPO to increase the following objective: $\mathbb{E}_{(s,a,m) \sim \gD} [\log D_\theta(s,a,m)]$
   \UNTIL{Convergence}
   \STATE {\bfseries Output:} Learned inference model $q_\psi(m|\tau)$, reward function $f_\theta(s,a,m)$ and policy $\pi_\omega(a|s, m)$.
\end{algorithmic}
\end{algorithm}

%
\section{Related Work}
%
%

Inverse reinforcement learning (IRL), first introduced by~\citet{ng2000irl}, is the problem of learning reward functions directly from expert demonstrations. Prior work tackling IRL include margin-based methods~\cite{abbeel2004apprenticeship, ratliff2006} and maximum entropy (MaxEnt) methods~\cite{ziebart2008maximum}. %
Margin-based methods suffer from being an underdefined problem, while MaxEnt requires the algorithm to solve the forward RL problem in the inner loop, making it challenging to use in non-tabular settings.
Recent works have scaled MaxEnt IRL to large function approximators, such as neural networks, by only partially solving the forward problem in the inner loop, developing an adversarial framework for IRL~\cite{finn2016connection, finn2016guided, fu2017learning, peng2018}. Other imitation learning approaches~\cite{gail, li2017infogail, hausman2017multi, Kuefler2018} are also based on the adversarial framework, but they do not recover a reward function.
We build upon the ideas in these single-task IRL works.
Instead of considering the problem of learning reward functions for a single task, we aim at the problem of inferring a reward that is disentangled from the environment dynamics and can quickly adapt to new tasks from a single demonstration by leveraging prior data.

We base our work on the problem of meta-learning. Prior work has proposed memory-based methods~\cite{rl2, mann, metanetworks, pearl} and methods that learn an optimizer and/or a parameter initialization~\cite{Andrychowicz2016, li17optimize, finn2017maml}. We adopt a memory-based meta-learning method similar to~\cite{pearl}, which uses a deep latent variable generative model~\cite{vae} to infer different tasks from demonstrations. While prior multi-task and meta-RL methods~\cite{hausman2018learning, pearl, saemundsson2018meta} have investigated the effectiveness of applying latent variable generative models to learning task embeddings, we focus on the IRL problem instead. Meta-IRL~\cite{xu2018learning, multitaskirl} incorporates meta-learning and IRL, showing fast adaptation of the reward functions to unseen tasks. Unlike these approaches, our method is not restrictred to discrete tabular settings and does not require access to grouped demonstrations sampled from a task distribution.
Meanwhile, one-shot imitation learning~\cite{duan2017, finn2017one, yu2018daml, yu2018hil} demonstrates impressive results on learning new tasks using a single demonstration; yet, they also require paired demonstrations from each task and hence need prior knowledge on the task distribution. More importantly, one-shot imitation learning approaches only recover a policy, and cannot use additional trials to continue to improve, which is possible when a reward function is inferred instead.
Several prior approaches for multi-task imitation learning~\cite{li2017infogail, hausman2017multi, sharma2018directedinfogail} propose to use unstructured demonstrations without knowing the task distribution, but they neither study quick generalization to new tasks nor provide a reward function. Our work is thus driven by the goal of extending meta-IRL to addressing challenging high-dimensional control tasks with the help of an unstructured demonstration dataset.

\section{Experiments}
%
\label{sec:experiments}

In this section, we seek to investigate the following two questions: (1) Can PEMIRL learn a policy with competitive few-shot generalization abilities compared to one-shot imitation learning methods using only unstructured demonstrations? (2) Can PEMIRL efficiently infer robust reward functions of new continuous control tasks where one-shot imitation learning fails to generalize, enabling an agent to continue to improve with more trials?

We evaluate our method on four simulated domains using the Mujoco physics engine~\cite{mujoco}. To our knowledge, there's no prior work on designing meta-IRL or one-shot imitation learning methods for complex domains with high-dimensional continuous state-action spaces with unstructured demonstrations. Hence, we also designed the following variants of existing state-of-the-art (one-shot) imitation learning and IRL methods so that they can be used as fair comparisons to our method:
\begin{itemize}[leftmargin=.3in]
\item \textbf{AIRL}: The original AIRL algorithm without incorporating latent context variables, trained across all demonstrations.
\item \textbf{Meta-Imitation Learning with Latent Context Variables (Meta-IL)}: As in~\citep{pearl}, we use the inference model $q_\psi(m|\tau)$ to infer the context of a new task from a single demonstrated trajectory, denoted as $\hat{m}$, and then train the conditional imitaiton policy $\pi_\omega(a|s,\hat{m})$ using the same demonstration. This approach also resembles~\cite{duan2017}.
\item \textbf{Meta-InfoGAIL}: Similar to the method above, except that an additional discriminator $D(s,a)$ is introduced to distinguish between expert and sample trajectories, and trained along with the conditional policy using InfoGAIL~\cite{li2017infogail} objective.
\end{itemize}
We use trust region policy optimization (TRPO)~\cite{trpo} as our policy optimization algorithm across all methods. We collect demonstrations by training experts with TRPO using ground truth reward. However, the ground truth reward is not available to imitation learning and IRL algorithms. We provide full hyperparameters, architecture information, data efficiency, and experimental setup details in Appendix~\ref{appendix:experiment}. We also include ablation studies on sensitivity of the latent dimensions, importance of the mutual information objective and the performance on stochastic environments in Appendix~\ref{app:ablation}. Full video results are on the anonymous supplementary website\footnote{Video results can be found at: \url{https://sites.google.com/view/pemirl}} and our code is open-sourced on GitHub\footnote{Our implementation of PEMIRL can be found at: \url{https://github.com/ermongroup/MetaIRL}}.

\begin{figure*}[!t]
    \centering
    \includegraphics[width=0.9\columnwidth]{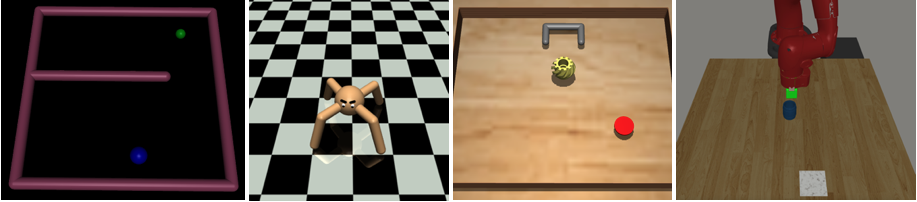}
    \caption{\textbf{Experimental domains} (left to right):
    Point-Maze, Ant, Sweeper, and Sawyer Pusher. 
    }
    \label{fig:setup}
\end{figure*}

\subsection{Policy Performance on Test Tasks}
%
\label{sec:imitation_results}

\begin{table}[h]
    \begin{center}
    \begin{small}
    \begin{tabular}{l|c|c|c|c}
    \toprule
        & Point Maze & Ant & Sweeper & Sawyer Pusher  \\
      \midrule
      Expert & $-5.21 \pm 0.93$ & $968.80 \pm 27.11$ & $-50.86 \pm 4.75$ & $-23.36 \pm 2.54$\\
      Random & $-51.39 \pm 10.31$ & $-55.65 \pm 18.39$ & $-259.71 \pm 11.24$ & $-106.88 \pm 18.06$\\
      \midrule
      AIRL~\cite{fu2017learning} & $-18.15 \pm 3.17$ & $127.61 \pm 27.34$ & $-152.78 \pm 7.39$ & $-51.56 \pm 8.57$\\
      Meta-IL & $\mathbf{-6.68} \pm 1.51$ & $218.53 \pm 26.48$ & $-89.02 \pm 7.06$ & $-28.13 \pm 4.93$\\
      Meta-InfoGAIL & $-7.66 \pm 1.85$ & $\mathbf{871.93} \pm 31.28$ & $-87.06 \pm 6.57$ & $-27.56 \pm 4.86$\\
      PEMIRL (ours) & $-7.37 \pm 1.02$ & $846.18 \pm 25.81$ & $\mathbf{-74.17} \pm 5.31$ & $\mathbf{-27.16} \pm 3.11$ \\
      \bottomrule
    \end{tabular}
    \end{small}
    \end{center}
    \vspace{0.2cm}
    \caption{One-shot policy generalization to test tasks on four experimental domains.
    Average return and standard deviations are reported over $5$ runs.}
    \label{tbl:imitation_results}
\end{table}

We first answer our first question by showing that our method is able to learn a policy that can adapt to test tasks from a single demonstration, on four continuous control domains: \textbf{Point Maze Navigation}: In this domain, a pointmass needs to navigate around a barrier to reach the goal. Different tasks correspond to different goal positions and the reward function measures the distance between the pointmass and the goal position; \textbf{Ant}: Similar to~\cite{finn2017maml}, this locomotion task requires fast adaptation to walking directions of the ant where the ant needs to learn to move backward or forward depending on the demonstration;
\textbf{Sweeper}: A robot arm needs to sweep an object to a particular goal position. Fast adaptation of this domain corresponds to different goal locations in the plane;
\textbf{Sawyer Pusher}: A simulated Sawyer robot is required to push a mug to a variety of goal positions and generalize to unseen goals.
We illustrate the set-up for these experimental domains in Figure~\ref{fig:setup}. 

We summarize the results in Table~\ref{tbl:imitation_results}. PEMIRL achieves comparable imitation performance compared to Meta-IL and Meta-InfoGAIL, while AIRL is incapable of handling multi-task scenarios without incorporating the latent context variables.

\subsection{Reward Adaptation to Challenging Situations}
%

%
\begin{figure*}[!t]
    \centering
    \includegraphics[width=0.2088\columnwidth]{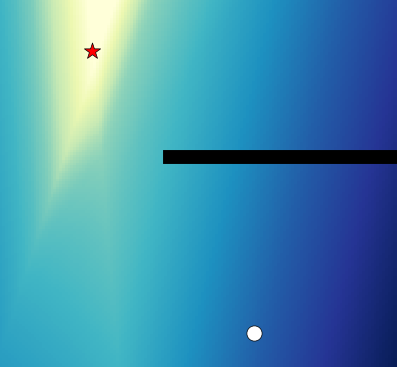}
    \includegraphics[width=0.2088\columnwidth]{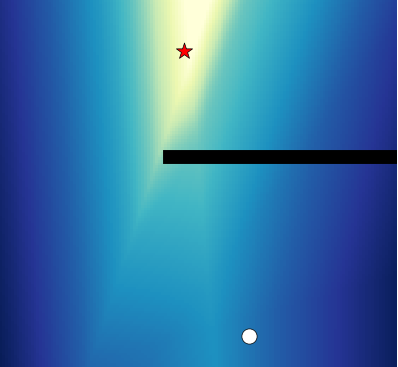}
    \includegraphics[width=0.2088\columnwidth]{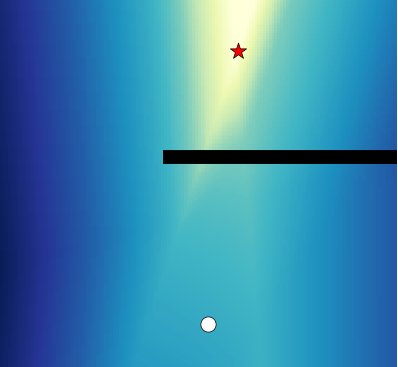}
    \includegraphics[width=0.252\columnwidth]{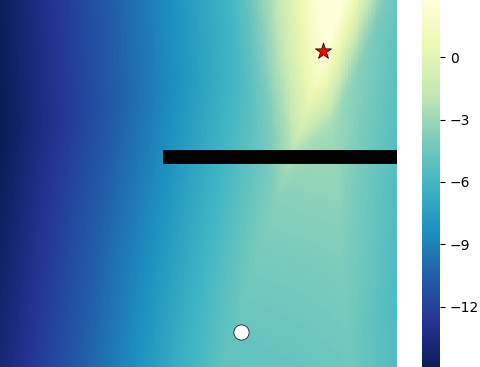}
    \caption{Visualizations of learned reward functions for point-maze navigation. The red star represents the target position and the white circle represents the initial position of the agent (both are different across different iterations). The black horizontal line represents the barrier that cannot be crossed. To show the generalization ability, the expert demonstration used to infer the target position are sampled from new target positions that have not been seen in the meta-training set.
    }
    \label{fig:maze_reward_fig}
\end{figure*}

\begin{figure*}[!t]
    \centering
    \includegraphics[width=0.9\columnwidth]{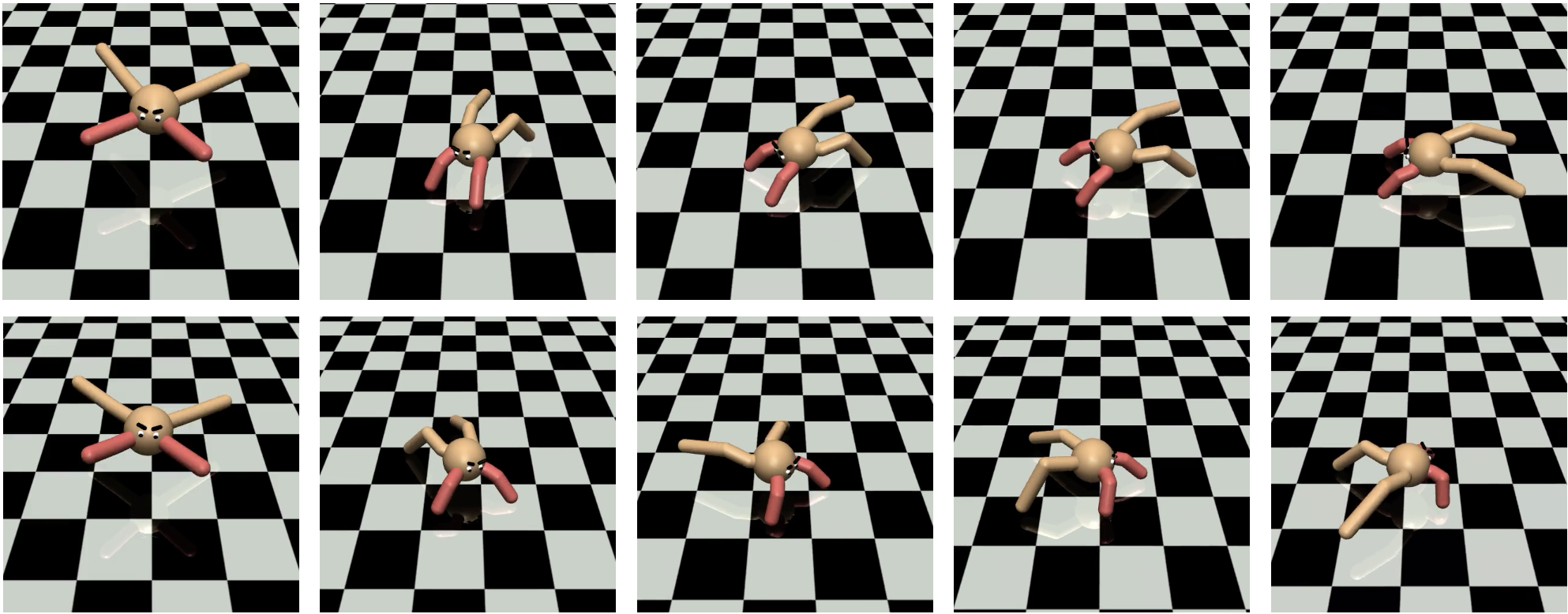}
    \caption{From top to bottom, we show the disabled ant running forward and backward respectively.
    }
    \label{fig:ant_film_strips}
\end{figure*}

After demonstrating that the policy learned by our method is able to achieve competitive ``one-shot'' generalization ability, we now answer the second question by showing PEMIRL learns a robust reward that can adapt to new and more challenging settings where the imitation learning methods and the original AIRL fail. Specifically, after providing the demonstration of an unseen task to the agent, we change the underlying environment dynamics but keep the same task goal. In order to succeed in the task with new dynamics, the agent must correctly infer the underlying goal of the task instead of simply mimicking the demonstration. We show the effectiveness of our reward generalization by training a new policy with TRPO using the learned reward functions on the new task.

\prg{Point-Maze Navigation with a Shifted Barrier}. Following the setup of ~\citet{fu2017learning}, at meta-test time, after showing a demonstration moving towards a new target position, we change the position of the barrier from left to right. 
As the agent must adapt by reaching the target with a different path from what was demonstrated during meta-training, it cannot succeed without correctly inferring the true goal (the target position in the maze) and learning from trial-and-error. As a result, all direct policy generalization approaches fail as all the policies are still directing the pointmass to the right side of the maze. As shown in Figure~\ref{fig:maze_reward_fig}, PEMIRL learns disentangled reward functions that successfully infer the underlying goal of the new task without much reward shaping. Such reward functions enable the RL agent to bypass the right barrier and reach the true goal position. The RL agent trained with the reward learned by AIRL also fail to bypass the barrier and
navigate to the target position, as without incorporating the latent context variables and treating the demonstration as multi-modal, AIRL learns an ``average'' reward and policy among different tasks. We also use the output of the discriminator of Meta-InfoGAIL as reward signals and evaluate its adaptation performance. The agent trained by this reward fails to complete the task since Meta-InfoGAIL does not explicitly optimize for reward learning and the discriminator output converges to uninformative uniform distribution at convergence.

\begin{table}[t]
    \begin{center}
    \setlength\aboverulesep{1.5pt}\setlength\belowrulesep{1.5pt}
    %
    \begin{tabular}{c|c|c|c}
    \toprule
      & Method & Point-Maze-Shift & Disabled-Ant  \\
      \midrule
      \multirow{3}{*}{\shortstack{
      Policy\\ Generalization}} & Meta-IL  & $-28.61 \pm 3.71$ & $-27.86 \pm 10.31$\\
      & Meta-InfoGAIL & $-29.40 \pm 3.05$ & $-51.08 \pm 4.81$\\
      & PEMIRL & $-28.93 \pm 3.59$ & $-46.77 \pm 5.54$ \\
      \midrule
      \multirow{3}{*}{\shortstack{Reward\\ Adaptation}} & AIRL & $-29.07 \pm 4.12$ & $-76.21 \pm 10.35$\\
      & Meta-InfoGAIL & $-29.72 \pm 3.11$ & $-38.73 \pm 6.41$ \\
      & PEMIRL (ours) & $\mathbf{-9.04} \pm 1.09$ & $\mathbf{152.62} \pm 11.75$\\
    \midrule
      & Expert & $-5.37 \pm 0.86$ & $331.17 \pm 17.82$\\
      \bottomrule
    \end{tabular}
    \end{center}
    \caption{Results on direct policy generalization and reward adaptation to challenging situations. Policy generalization examines if the policy learned by Meta-IL is able to generalize to new tasks with new dynamics, while reward adaptation tests if the learned RL can lead to efficient RL training in the same setting. The RL agent learned by PEMIRL rewards outperforms other methods in such challenging settings.
    }
    \label{tbl:reward_results}
    \vspace{-0.5cm}
\end{table}

\prg{Disabled Ant Walking}. As in ~\citet{fu2017learning}, we disable and shorten two front legs of the ant such that it cannot walk without changing its gait to a large extent. Similar to Point-Maze-Shift, all imitaiton policies fail to maneuver the disabled ant to the right direction. As shown in Figure~\ref{fig:ant_film_strips}, reward functions learned by PEMIRL encourage the RL policy to orient the ant towards the demonstrated direction and move along that direction using two healthy legs, which is only possible when the inferred reward corresponds to the true underlying goal and is disentangled with the dynamics. In contrast, the learned reward of original AIRL as well as the discriminator output of Meta-InfoGAIL cannot infer the underlying goal of the task and provide precise supervision signal, which leads to the unsatisfactory performance of the induced RL policies. Quantitative results are presented in Table~\ref{tbl:reward_results}.

\section{Conclusion}
%
In this paper, we propose a new meta-inverse reinforcement learning algorithm, PEMIRL, which is able to efficiently infer robust reward functions that are disentangled from the dynamics and highly correlated with the ground-truth rewards under meta-learning settings. To our knowledge, PEMIRL is the first model-free Meta-IRL algorithm that can achieve this and scale to complex domains with continuous state-action spaces. PEMIRL generalizes to new tasks by performing inference over a latent context variable with a single demonstration, on which the recovered policy and reward function are conditioned. Extensive experimental results demonstrate the scalability and effectiveness of our method against strong baselines.
\section*{Acknowledgments}
This research was supported by Toyota Research Institute, NSF (\#1651565, \#1522054, \#1733686), ONR (N00014-19-1-2145), AFOSR (FA9550- 19-1-0024). The authors would like to thank Chris Cundy for discussions over the paper draft.
\bibliography{ref}
\bibliographystyle{plainnat}
\newpage
\appendix
\section{Proof of Lemma~\ref{lemma:airl}}\label{appendix:airl}
From Section 2.3 and 2.4 in \citep{levine2018reinforcement}, we know that the policy whose induced trajectory distribution is Equation~(\ref{eq:conditional}) takes the following energy-based form:
\begin{align*}
    \pi_\theta(a_t|s_t, m) &= \exp(Q_{\text{soft}}(s_t, a_t, m) - V_{\text{soft}}(s_t, m))\\
    Q_{\text{soft}}(s_t, a_t, m) &= f_\theta(s_t, a_t, m) + \log \mathbb{E}_{s_{t+1} \sim P(\cdot|s_t, a_t, m)}[\exp(V_{\text{soft}}(s_{t+1}, m))]\\
    V_{\text{soft}}(s_t, m) &= \log \int_{\gA}  \exp(Q_{\text{soft}}(s_t, a', m)) d a'
\end{align*}
which corresponds to the optimal policy to the following entropy regularized reinforcement learning problem (for a certain value of $m$):
\begin{align}
    \max_\pi \mathbb{E}_\pi \left[ \sum_{t=1}^T f_\theta(s_t, a_t, m) - \log \pi(a_t|s_t, m) \right]\label{eq:entropy-rl}
\end{align}
From Section~\ref{sec:preliminary}, we know that Equation~(\ref{eq:entropy-rl}) is exactly the training objective for the adaptive sampler $\pi_\omega$ in AIRL. Thus, the trajectory distribution of the optimal policy $\pi_{\omega^*}$ matches $p_\theta(\tau|m)$ defined in Equation~(\ref{eq:conditional}).

\section{Proof of Lemma~\ref{lemma:gradient}}\label{appendix:gradient}
First, the gradient of $\gL_\text{info}(\theta, \psi)$ w.r.t. $\theta$ can be written as:
\begin{align}
    \frac{\partial}{\partial \theta} \gL_\text{info}(\theta, \psi) = \mathbb{E}_{m \sim p(m), \tau \sim p(\tau|m, \theta)} \log q(m|\tau, \psi) \frac{\partial}{\partial \theta} \log p_\theta(\tau|m)\label{eq:gradient-theta}
\end{align}

As $p_\theta(\tau|m)$ is an energy-based distribution (Equation~(\ref{eq:conditional})), we need to derive the gradient of $\log p(\tau|m, \theta)$ w.r.t. $\theta$:
\begin{align}
    \frac{\partial}{\partial \theta} \log p(\tau|m, \theta) &= \frac{\partial}{\partial \theta} \left[\log\left(\eta(s_1) \prod_{t=1}^T P(s_{t+1}|s_t,a_t)\right) + \sum_{t=1}^T f_\theta(s_t,a_t,m) - \log Z(\theta)\right]\\
    &= \sum_{t=1}^T \frac{\partial}{\partial \theta} f_\theta(s_t,a_t,m) - \frac{\partial}{\partial \theta} \log Z(\theta)\\
    &= \sum_{t=1}^T \frac{\partial}{\partial \theta} f_\theta(s_t,a_t,m) - \mathbb{E}_{\tau \sim p(\tau|m, \theta)}\left[\sum_{t=1}^T \frac{\partial}{\partial \theta} f_\theta(s_t,a_t,m)\right] \label{eq:derive-theta}
\end{align}
Substituting Equation~(\ref{eq:derive-theta}) into Equation~(\ref{eq:gradient-theta}), we get:
\begin{align}
    \mathbb{E}_{m \sim p(m), \tau \sim p_{\theta}(\tau|m)}\left[ \log q_\psi(m|\tau) \left[ \sum_{t=1}^T \frac{\partial}{\partial \theta} f_\theta(s_t,a_t,m) - \mathbb{E}_{\tau' \sim p_{\theta}(\tau|m)}\sum_{t=1}^T \frac{\partial}{\partial \theta} f_\theta(s'_t,a'_t,m)\right] \right]\nonumber
\end{align}
With Lemma~\ref{lemma:airl}, we know that when $\omega$ is trained to optimality, we can sample from $p_{
\pi_\omega^*}(\tau|m)$ to construct an unbiased gradient estimation.

\section{Meta-Testing Procedure of PEMIRL}\label{appendix:meta-test}
We summarize the meta-test stage of PEMIRL for adapting reward functions to new tasks in Algorithm~\ref{meta_test_algorithm}.
\begin{algorithm}[h]
   \caption{PEMIRL Meta-Test for Reward Adaptation}
   \label{meta_test_algorithm}
\begin{algorithmic}
   \STATE {\bfseries Input:} A test context variable $m \sim p(m)$, a test expert demonstration $\tau_E \sim p_{\pi_E}(\tau|m)$,
    and ground-truth reward $r(s,a,m)$.
   \STATE Infer the latent context variable from the test demonstration: $\hat{m} \sim q_\psi(m|\tau_E)$.
   \STATE Train a policy using TRPO w.r.t. adapted reward function $f_\theta(s, a, \hat{m})$.
   \STATE Evaluate the learned policy with $r(s,a,m)$.
\end{algorithmic}
\end{algorithm}

\section{Graphical Model of PEMIRL}
\label{app:graphical_model}
\begin{figure}[t]
    \centering
    \vspace{-0.3cm}
    \includegraphics[width=0.7\columnwidth]{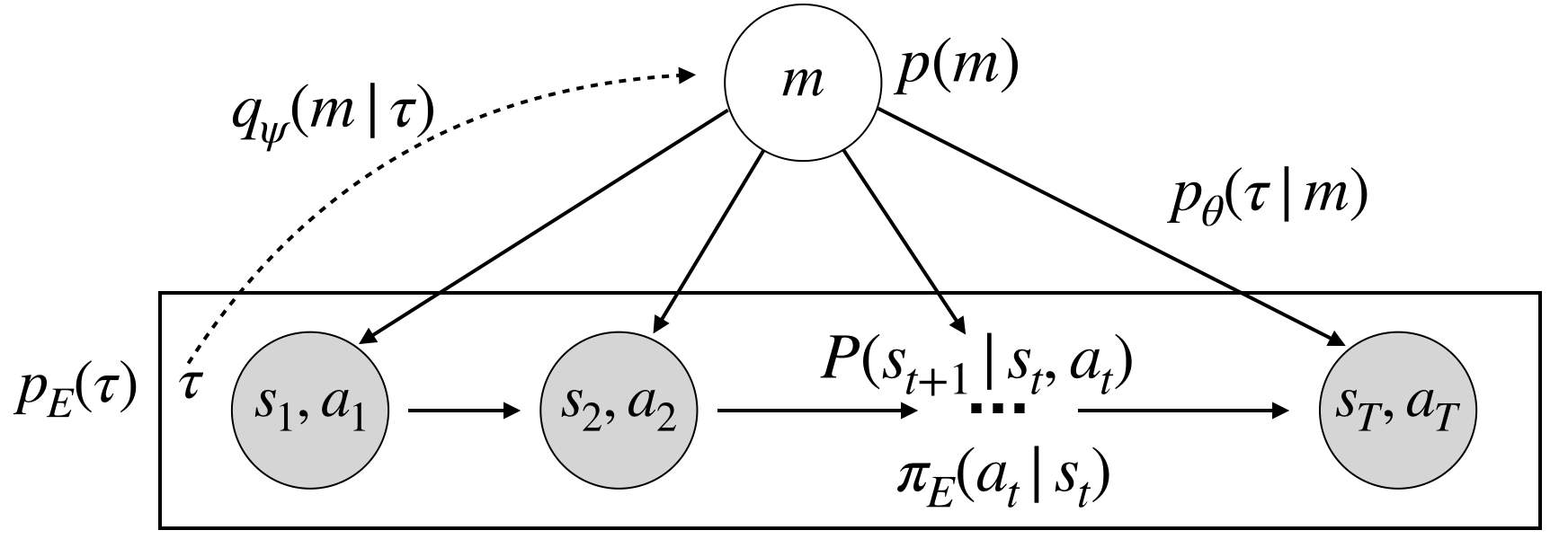}
    \vspace{-0.3cm}
    \caption{Graphical model underlying PEMIRL.
    }
    \vspace{-0.4cm}
    \label{fig:graphical_model}
\end{figure}

Here we show the graphical model of the PEMIRL framework in Figure~\ref{fig:graphical_model}.

\section{Ablation Studies}
\label{app:ablation}

In this section, we perform ablation studies on the sensitivity of the latent dimensions, importance of the mutual information loss ($\mathcal{L}_\text{info}$) term, and stochasticity of the environment. We conduct each ablation study on the Point-Maze-Shift environment to evaluation the reward adaptation performance.

\begin{table}[h]
\setlength\aboverulesep{4.5pt}\setlength\belowrulesep{4.5pt}
\begin{small}
	\begin{minipage}{0.25\linewidth}
	\centering
	\vspace{-3pt}
    \begin{tabular}{c|c}
    \toprule
     latent dim. &  return \\
     \midrule
      1 &   $-10.58 \pm 1.27$  \\ 
      3  & $-14.13 \pm 1.21$ \\
      5 &  $-15.41 \pm 1.40$ \\
      \bottomrule
    \end{tabular}
    \caption{PEMIRL is robust to latent dimensions.}
    \label{tbl:latent}
	\end{minipage}
	\hfill
	\begin{minipage}{0.3\linewidth}
	\centering
	\vspace{1pt}
    \begin{tabular}{c|c}
    \toprule
     method &  return \\
     \midrule
      PEMIRL w/o MI &   $-39.24 \pm 3.48$  \\ 
      PEMIRL  & $-14.13 \pm 1.21$ \\
      \bottomrule
    \end{tabular}
    \caption{The MI term is important for training PEMIRL.}
    \label{tbl:mi}
	\end{minipage}
	\hfill
	\begin{minipage}{0.25\linewidth}
	\centering
	\vspace{0pt}
    \begin{tabular}{c|c}
    \toprule
     method &  return \\
     \midrule
      Meta-IL &   $-30.58 \pm 4.17$  \\ 
      PEMIRL & $-17.39 \pm 0.84$\\
      \bottomrule
    \end{tabular}
    \caption{PEMIRL excels in stochastic env.}
    \label{tbl:schochastic}
	\end{minipage}
	\hfill
	\end{small}
\end{table}

\prg{Sensitivity of the latent dimension.} We first investigate the sensitivity of different latent dimensions by running PEMIRL with latent dimension picked from $\{1, 3, 5\}$ on Point-Maze-Shift where the ground-truth latent dimension is $3$. The results are summarized in Table~\ref{tbl:latent}. We can observe that PEMIRL with various latent dimension specifications all outperform the best baseline (return -28.61) stably and is hence robust to dimension mis-specifications.

\prg{Importance of $\mathcal{L}_\text{info}$.} As shown in Table~\ref{tbl:mi}, the reward function learned by PEMIRL without the mutual information objective failed to induce a good policy in the reward adaptation setting, which demonstrates the importance of using $\mathcal{L}_\text{info}$.

\prg{Performance on stochastic environment.} We create a stochastic version of Point-Maze-Shift (maze size: $60 \times 100$ cm) by changing its deterministic transition dynamics into a stochastic one. Specifically, $p(s_{t+1}|s_t,a_t)$ is now realized as a Gaussian with standard deviation being 1 cm. As shown in Table~\ref{tbl:schochastic}, the average return of PEMIRL outperforms the best baseline Meta-IL by a large margin.

\section{Additional Experimental Details}\label{appendix:experiment}

\subsection{Network Architectures}

For all methods except AIRL, $q_\psi(m|\tau)$ and $\pi_\omega(a|s,m)$ are represented as $2$-layer fully-connected neural networks with $128$ and $64$ hidden units respectively and ReLU as the activation function. 

Following \cite{fu2017learning}, to alleviate the reward ambiguity problem, we represent the reward function with two components (a context-dependent disentangled reward estimator $r_{\theta}(s,m)$ and a context-dependent potential function $h_{\phi}(s,m)$):
$$f_{\theta, \phi}(s_t,a_t,s_{t+1},m) = r_{\theta}(s_t,m) + \gamma h_{\phi}(s_{t+1},m) - h_{\phi}(s_t,m)$$
Here $r_\theta(s,m)$ and $h_\phi(s,m)$ are realized as a $2$-layer fully-connected neural networks with $32$ hidden units.

\subsection{Environment Details}
\prg{Point-Maze}. The ground-truth reward corresponds to negative distance toward the goal position as well as controlling the pointmass from moving too fast. We use $100$ meta-training tasks and $30$ meta-training tasks.

\prg{Ant}. The ground-truth reward corresponds to moving as far as possible forward or backward without being flipped. We have $2$ tasks in this domain.

\prg{Sweeper}. The ground-truth reward is the negative distance from the sweeper to the object plus the negative distance from the object to the goal position. We train all methods on $100$ meta-training tasks and test them on $30$ meta-test tasks.

\prg{Sawyer Pushing}. The ground-truth reward in this domain is similar to Sweeper, and we also use $100$ meta-training tasks and $30$ meta-test tasks.

\subsection{Training Details}
\textit{Training the policy.} During training TRPO, we use an entropy regularizer $1.0$ for Point-Maze, and $0.1$ for the other three domains. We find that adding an imitation objective in PEMIRL that maximizes the log-likelihood of the sampled expert trajectory conditioned on the latent context variable inferred by $q_\psi$ with scaling factor $0.01$ accelerates policy training.

\textit{Training the inference network and the reward model.} We train $q_\psi(m|\tau)$, $r_\theta(s,m)$ and $h_\phi(s,m)$ using the Adam optimizer with default hyperparameters.

\textit{Scaling up the mutual information regularization.} Note that in Equation~\ref{eq:objective}, $\beta$ does not necessarily need to be equal to $1$. Adjusting $\beta$ is equivalent to scaling $\mathcal{L}_\text{info}(\theta, \psi)$. We scale $\mathcal{L}_\text{info}(\theta, \psi)$ by $0.1$ for all of our experiments.

\textit{Policy and inference network initialization.} We initialize and $q_\psi(m|\tau)$ using Meta-IL discussed in Section~\ref{sec:experiments} while randomly initializing the policy $\pi_\omega(a|s,m)$.

\textit{Stabilizing adversarial training.} As in ~\cite{fu2017learning}, we mix policy samples generated from previous $20$ training iterations and use them as negatives when training the disriminator. We find that such a strategy prevents the discriminator from overfitting to samples from the current iteration.

\subsection{Data Efficiency}

During meta-training, for the Point-Maze environment, it takes about 32M simulation steps to converge (similar to other methods such as Meta-InfoGAIL that takes 28M), which amounts to about 2 hours on one Nvidia Titan-Xp GPU; for the Ant environment, it takes about 13.8M simulation steps (Meta-InfoGAIL takes 12M) and about 40 hours on the same hardware (the state-action dimension is much larger than that of Point-Maze).

At meta-testing phase, the data efficiency of PEMIRL is comparable to RL training with the oracle ground-truth reward as shown in Table~\ref{tbl:data_efficiency}.

\begin{table}[h]
    \begin{center}
    \setlength\aboverulesep{4.5pt}\setlength\belowrulesep{4.5pt}
    \begin{small}
    \begin{tabular}{l|c|c}
    \toprule
        & Point-Maze-Shift & Disabled-Ant\\
      \midrule
      RL w/ oracle reward & 4M env steps & 15M env steps\\
      PEMIRL & 5.4M env steps & 18M env steps\\
      \bottomrule
    \end{tabular}
    \end{small}
    \end{center}
    \vspace{0.2cm}
    \caption{Comparison on data efficiency between RL trained with reward learned by PERMIL and RL trained with oracle reward. The methods have been shown to have similar data efficiency on Point-Maze-Shift and Disabled-Ant.}
    \vspace{-0.4cm}
    \label{tbl:data_efficiency}
\end{table}

\end{document}